\documentclass{article} 
\usepackage{iclr2027_conference,times}

\usepackage{amsmath,amsfonts,bm}

\def\eqref#1{equation~\ref{#1}}

\def\1{\bm{1}}

\DeclareMathAlphabet{\mathsfit}{\encodingdefault}{\sfdefault}{m}{sl}
\SetMathAlphabet{\mathsfit}{bold}{\encodingdefault}{\sfdefault}{bx}{n}

\iclrfinalcopy

\usepackage{hyperref}
\usepackage{url}
\usepackage{microtype}
\usepackage{graphicx}
\usepackage{subcaption}
\usepackage{multirow}
\usepackage[table]{xcolor}
\usepackage{enumitem}
\usepackage{booktabs} 
\usepackage{amsmath}
\usepackage{algorithm}
\usepackage{algpseudocode}
\usepackage{xcolor}
\usepackage{wrapfig}
\usepackage{graphicx}
\usepackage{amsthm}
\usepackage{amsmath}
\usepackage{amssymb}
\usepackage{pifont}

\newcommand{\cmark}{\ding{51}}
\newcommand{\xmark}{\ding{55}}
\newcommand{\pmark}{$\triangle$}

\newtheorem{theorem}{Theorem}
\newtheorem{proposition}{Proposition}

\title{CFG-Policy: Unified Diffusion Policy Learning Framework via Classifier-free Guidance}

\author{
\centerline{
\textbf{Shutong Ding}$^{1,2,*}$ \quad
\textbf{Zhaoyang Liu}$^{1,2,*}$ \quad
\textbf{Yimiao Zhou}$^{1}$ \quad
\textbf{Jiayuan Gu}$^{1}$
}
\\[0.6em]
\centerline{
\textbf{Jingya Wang}$^{1}$ \quad
\textbf{Ye Shi}$^{1,2,\dagger}$
}
\\[0.8em]
\centerline{
$^{*}$Equal Contribution
\qquad
$^{\dagger}$Corresponding Author
}
\\[0.5em]
\centerline{
$^{1}$ShanghaiTech University, Shanghai, China $\quad$
$^{2}$InstAdapt, Shanghai, China
}
}

\begin{document}

\maketitle

\begin{abstract}
Diffusion policies are increasingly used in robotic control and manipulation, vision-language-action models, and world-model-based agents. Their ability to represent multimodal action distributions offers rich opportunities for exploration in reinforcement learning (RL). However, optimizing diffusion policies with RL can reduce the behavioral diversity and further limit their exploration capability. In RL post-training, directly fine-tuning a pretrained policy may also disrupt its acquired capabilities, while action-space residuals may lack the expressive power for multimodal exploration. Together, these issues reveal a common challenge across online and offline-to-online diffusion RL: improving task performance while retaining the behavioral diversity and capabilities needed for continued adaptation. We introduce CFG-Policy, a unified, plug-and-play framework that separates exploration and exploitation into two generative policies and combines their score or velocity fields during sampling through classifier-free guidance. For online RL, a frozen exploration branch sustains behavioral diversity while an exploitation branch learns from task rewards; for offline-to-online RL, a pretrained exploitation branch remains frozen while a trainable branch steers the composed behavior. CFG-Policy preserves the underlying RL objectives and improves performance with QSM and QVPO on 5 MuJoCo tasks and with Flow-SDE+PPO on 4 RoboTwin manipulation tasks. Project page is at \url{https://zhy-liu.github.io/cfg-policy-page}.
\end{abstract}

\section{Introduction}\label{sec:intro}

Diffusion policies have become an expressive model class for sequential decision making. Their ability to represent complex, multimodal action distributions has supported robotic control and manipulation \citep{chi2023diffusionpolicy,liu2025sample}, vision-language-action policies \citep{chen2025pirl}, and world-model-based policy adaptation \citep{chandra2025diwa}. Related flow-matching policies generate actions through continuous velocity fields and are increasingly used in reinforcement learning (RL) \citep{lipman2023flowmatching,park2025fql,ding2026genpo,hu2026genpo}. Beyond fitting demonstrations, these generative policies offer a rich space of plausible actions to explore when learning from rewards.

However, the diversity that makes diffusion policies attractive for exploration can decline as reward optimization concentrates the policy on currently successful behaviors. Prior work has observed mode collapse and training instability when diffusion models are fine-tuned with RL \citep{barcelo2024avoiding}. In online diffusion RL, methods such as Q-Score Matching and QVPO improve task performance by updating a single generative policy \citep{psenka2024qsm,ding2024qvpo}; continued improvement then depends on that same policy still producing sufficiently diverse actions. In RL post-training, direct optimization of a pretrained diffusion or flow policy \citep{ren2025dppo,zhang2025reinflow} may disturb capabilities learned before RL. Residual policy methods can preserve a base policy, but action-space corrections may provide less expressive exploration than modifying the generative process itself \citep{johannink2019residual,yuan2025policydecorator}. These settings expose the same underlying challenge: how can diffusion RL improve task performance while retaining the behavioral diversity and capabilities needed for continued adaptation?

To address these challenges, we propose a unified, plug-and-play framework called \textbf{CFG-Policy} that separates exploration and exploitation into two generative policies. Inspired by classifier-free guidance (CFG) \citep{ho2022classifier}, it combines their score or velocity fields at each step of action generation. Given two fields $v_A$ and $v_B$, the field of behavior policy is induced by
\begin{equation}
v_w = v_A + w  (v_B - v_A)
\end{equation}
where $w$ controls their relative contribution. Both fields act on the same intermediate sample, so the combination steers the entire generative trajectory rather than interpolating completed actions. Because this composition replaces only the behavior sampler, it can be added to different diffusion RL algorithms without redefining their actor or critic objectives.

\begin{figure}
    \centering
    \includegraphics[width=\linewidth]{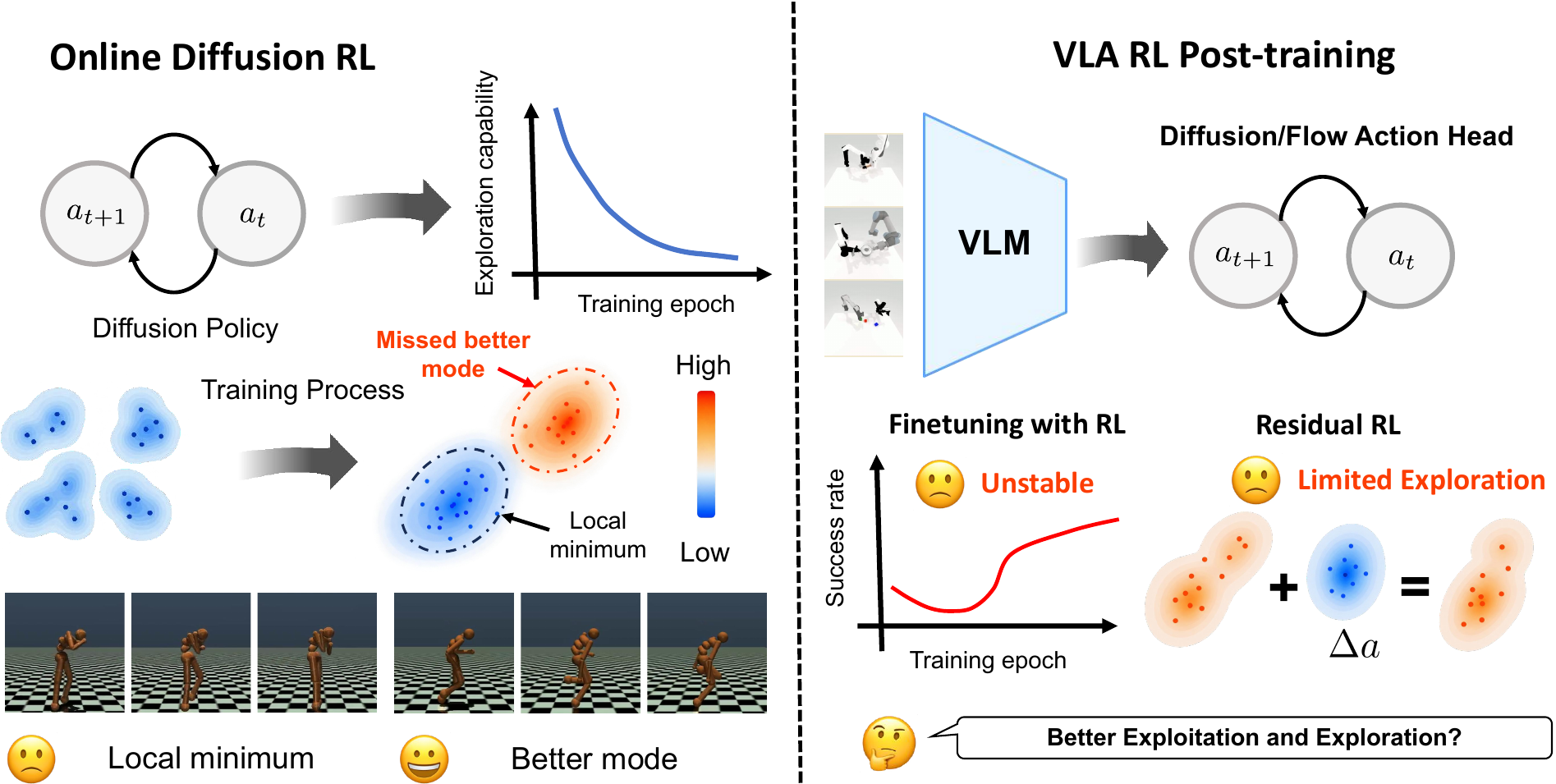}
    \caption{Exploration–exploitation challenges in diffusion RL. In online diffusion RL, optimizing a single generative policy can progressively reduce behavioral diversity and miss better modes. In VLA RL post-training, directly fine-tuning the pretrained policy may disrupt pretrained capabilities, while residual policies can provide limited exploration.}
    \label{fig:placeholder}
\end{figure}

The same mechanism handles two complementary learning regimes by changing which branch is frozen. In \textbf{online RL}, a separately trained exploration policy remains fixed while the exploitation policy learns from task rewards; the composed sampler maintains an exploratory source throughout training. In \textbf{offline-to-online RL post-training}, a pretrained exploitation policy remains fixed while a second, trainable branch steers the composed policy toward higher-return behaviors. The framework thus preserves exploration while learning exploitation in the first regime, and preserves an acquired policy while learning adaptive behavior in the second. We test this formulation with QSM and QVPO on five MuJoCo tasks and with Flow-SDE+PPO on four RoboTwin manipulation tasks, finding improvements in both regimes.

Our contributions are:

\begin{itemize}[leftmargin=*]

    \item We provide a unified exploration--exploitation view of online RL and offline-to-online RL post-training: the former calls for preserving exploration while learning exploitation, whereas the latter calls for preserving an acquired policy while learning adaptive behavior;
    
    \item We introduce \textbf{CFG-Policy}, a plug-and-play framework that realizes this view by combining separate exploration and exploitation policies during generative sampling while preserving the underlying RL optimization objectives;

    \item We demonstrate the effectiveness and versatility of \textbf{CFG-Policy} across both regimes, improving QSM and QVPO on MuJoCo control tasks and Flow-SDE+PPO on RoboTwin manipulation tasks.
\end{itemize}

\section{Related Work}
\label{sec:related_work}

\begin{table}[t]
\centering
\caption{
Comparison of representative generative-policy RL methods.
CFG-Policy is a plug-and-play module applicable to compatible generative RL backbones listed above.
\cmark: explicit support;
\pmark: partial or implicit support;
\xmark: unsupported.
\textbf{E-E}: Exploration-Exploitation
}
\label{tab:related_work}
\small
\setlength{\tabcolsep}{4pt}
\renewcommand{\arraystretch}{1.12}
\resizebox{\linewidth}{!}{
\begin{tabular}{lllcccc}
\toprule
\textbf{Setting} & \textbf{Family} & \textbf{Method}
& \textbf{Exploration}
& \textbf{Adaptive E-E}
& \textbf{E-E Decoupling}
& \textbf{Capability Preserv.} \\
\midrule

\multirow{6}{*}{\textbf{Generative RL}}
& \multirow{2}{*}{Q-weighted}
& QVPO & \cmark & \pmark & \xmark & \xmark \\
& & SDAC & \cmark & \cmark & \xmark & \xmark \\

\cmidrule(lr){2-7}

& \multirow{4}{*}{Critic-Guided}
& DIPO & \pmark & \xmark & \xmark & \xmark \\
& & QSM & \pmark & \xmark & \xmark & \xmark \\
& & DACER & \cmark & \cmark & \xmark & \xmark \\
& & DIME & \cmark & \cmark & \xmark & \xmark \\

\midrule

\multirow{5}{*}{\textbf{RL Post-Training}}
& \multirow{4}{*}{\shortstack{Policy-Gradient\\Tuning}}
& DPPO & \pmark & \xmark & \xmark & \xmark \\
& & ReinFlow & \cmark & \pmark & \xmark & \xmark \\
& & Flow-SDE & \cmark & \cmark & \xmark & \xmark \\
& & Flow-Noise & \cmark & \cmark & \xmark & \xmark \\

\cmidrule(lr){2-7}

& Residual Adaptation
& Residual RL & \xmark & \xmark & \xmark & \cmark \\

\midrule

\multicolumn{3}{l}{
\textbf{Any compatible backbone + CFG-Policy (Ours)}
}
& \textbf{\cmark}
& \textbf{\cmark}
& \textbf{\cmark}
& \textbf{\cmark} \\

\bottomrule
\end{tabular}}
\end{table}

\paragraph{Classifier-Free Guidance.}
Guidance was originally introduced in diffusion models to steer generation toward desired conditions.
Classifier guidance uses gradients from an auxiliary classifier to modify the reverse diffusion process \citep{dhariwal2021diffusion}, while classifier-free guidance (CFG) instead combines conditional and unconditional model predictions during sampling \citep{ho2022classifier}.
The guidance scale controls the strength of this steering and has been widely used in text-to-image systems such as GLIDE and Imagen \citep{nichol2021glide,saharia2022photorealistic}.
Beyond conditional generation within one model, composable diffusion models combine separately modeled concepts through their generative scores \citep{liu2022composable}.
Subsequent work shows that successful score composition also depends on the sampler and studies MCMC-based corrections for compositional generation \citep{du2023reduce}.

\paragraph{Generative Reinforcement Learning.}
Offline RL established diffusion policies as expressive models of multimodal behavior: Diffusion-QL combines behavior modeling with Q-value maximization \citep{wang2023diffusionql}, while IDQL reweights actions sampled from a diffusion behavior policy using critic estimates \citep{hansenestruch2023idql}.
Online diffusion RL must improve policies through interaction rather than samples from a fixed dataset \citep{ding2026sample}.
QVPO uses a Q-weighted variational objective \citep{ding2024qvpo}, and SDAC uses reweighted score matching to fit value-induced maximum-entropy policies \citep{ma2025sdac}.
DIPO improves sampled actions with $\nabla_a Q$ before refitting the policy \citep{yang2023dipo}, whereas QSM aligns the policy score with the critic gradient \citep{psenka2024qsm}.
DACER, DIME, and MaxEntDP further study actor--critic updates and entropy-aware optimization for diffusion policies \citep{wang2024dacer,celik2025dime,dong2025maxentdp}.
ExDM explores diffusion models for unsupervised reinforcement learning by modeling state visitation distributions and deriving intrinsic rewards from distributional novelty, encouraging agents to discover rarely visited states \citep{ying2026exploratory}.
Separately, CFGRL treats CFG as a controllable improvement operator for extracting a policy from offline data \citep{frans2025cfgrl}.
These methods illustrate different ways to optimize or guide a generative policy, but generally leave exploration and exploitation coupled within the policy being improved.

\paragraph{VLA and Generative Policy RL Post-Training.}
Large vision-language-action (VLA) policies such as $\pi_0$ use flow matching to generate robot actions from visual and language inputs \citep{black2024pi0}.
RL post-training seeks to improve such pretrained behavior through reward feedback.
DPPO and ReinFlow develop policy-gradient methods for diffusion and flow policies \citep{ren2025dppo,zhang2025reinflow}, while $\pi_{\mathrm{RL}}$ applies online RL fine-tuning to flow-based VLA models \citep{chen2025pirl}.
RIPT-VLA uses sparse binary success rewards and interactive rollouts to post-train pretrained VLA models \citep{tan2025riptvla}, and Flow Policy Optimization addresses policy-ratio estimation for online fine-tuning of flow-matching VLAs \citep{lyu2025fpo}.
In the offline setting, Adaptive Reinforced Flow Matching controls the effect of advantage-weighted RL signals on the flow loss to stabilize VLA post-training \citep{zhang2025arfm}.
An alternative is residual adaptation, which freezes a base policy and learns an action-level correction \citep{johannink2019residual,silver2018residual,yuan2025policydecorator}.

However, online diffusion RL methods can lose exploratory diversity as the policy converges toward high-reward behaviors, while RL post-training needs a more stable and sample-efficient way to explore beyond a pretrained policy. The proposed CFG-Policy addresses these distinct challenges by composing separate exploration and exploitation policies throughout diffusion or flow sampling. In online RL settings, a frozen exploration branch continues to supply diverse behavior as the exploitation branch improves. In offline-to-online RL post-training, the pretrained policy remains frozen while a trainable generative branch explores and steers the composed policy toward higher returns. This design preserves the selected behavioral capability, enables adaptation throughout the generative trajectory rather than only through final-action residuals, and can be plugged into compatible RL backbones without changing their optimization objectives. Notably, CFGRL uses guidance as a policy-improvement operator learned from fixed offline data \citep{frans2025cfgrl}, whereas CFG-Policy combines two behaviorally distinct policies and optimizes one through RL interaction, as summarized in Table~\ref{tab:related_work}.

\section{Method}
\label{sec:method}

As shown in Figure \ref{fig:main}, we propose CFG-Policy which aims to address the two learning regimes discussed in the Introduction through a common principle: \emph{preserve one behavioral capability in a frozen generative policy while optimizing a complementary one}. The two policies are composed during diffusion or flow sampling, allowing the same mechanism to preserve exploration in online RL and pretrained behavior in RL post-training.

\begin{figure}
    \centering
    \includegraphics[width=0.9\linewidth]{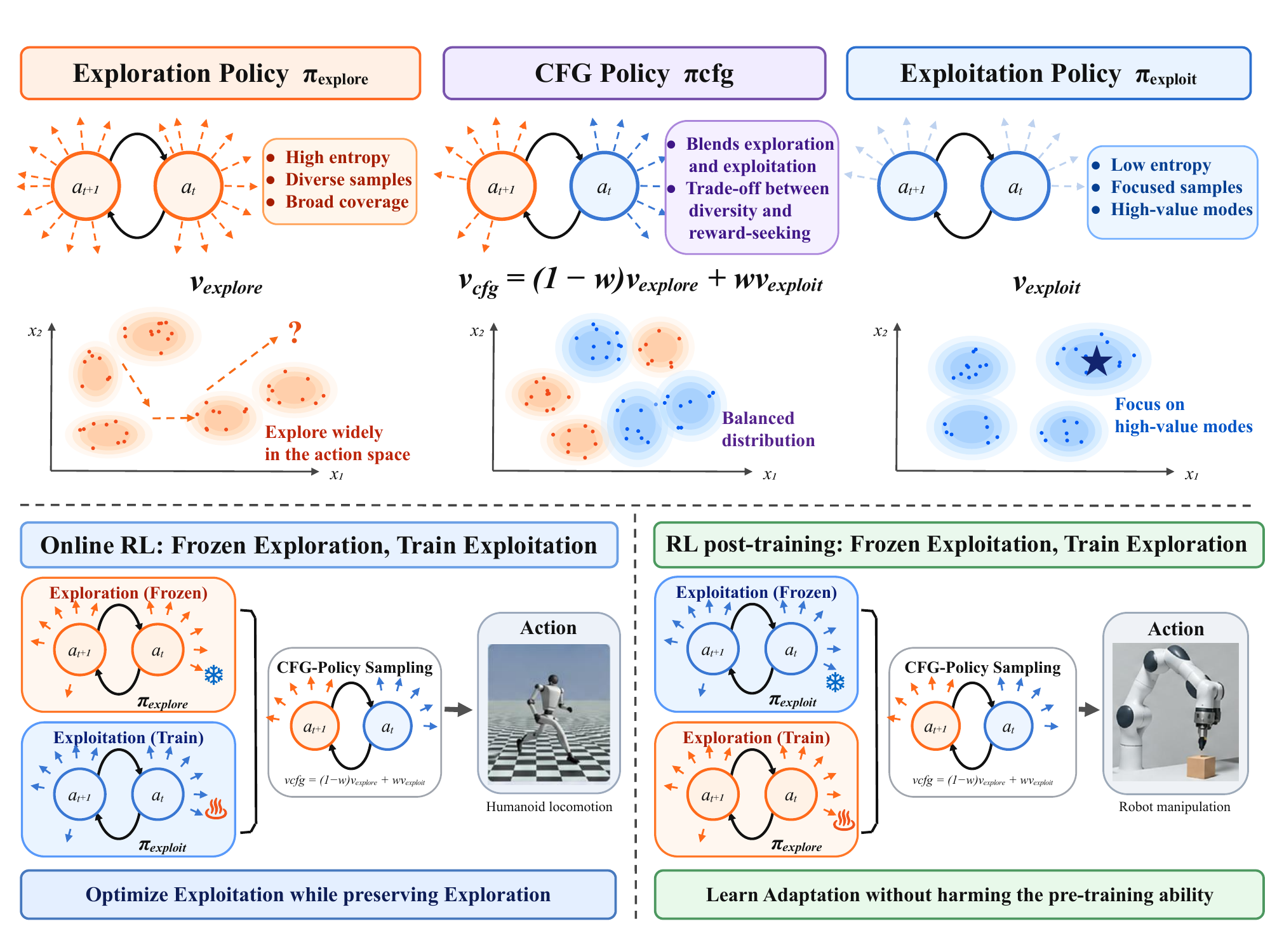}
    \caption{The sampling process of CFG-Policy. It composes two generative policies with different behavioral preferences in score space with guidance weight $w$. }
    \label{fig:main}
\end{figure}

\subsection{CFG-Policy: Compositional Generative Policies}

Consider a diffusion or flow policy that generates an action
$a=x_0$ from $x_T \sim \mathcal{N}(0,I)$. Given two generative policies
$\pi_A$ and $\pi_B$ with generative fields
$v_A(x_t,s,t)$ and $v_B(x_t,s,t)$, CFG-Policy constructs
\begin{equation}
    v_w(x_t,s,t)
    =
    (1-w)v_A(x_t,s,t)
    +
    w v_B(x_t,s,t),
    \label{eq:cfg_policy_field}
\end{equation}
where $w$ controls their relative contribution. Both fields operate on
the same intermediate sample $x_t$, so the composition steers the entire
generative trajectory rather than interpolating two independently generated
final actions.

We denote by $\pi_w^{\mathrm{sam}}(a\mid s)$ the endpoint action distribution induced by running the generative sampler with the composed field in ~\eqref{eq:cfg_policy_field}.

\begin{proposition}[Endpoint Interpretation under Path Consistency]
\label{prop:endpoint_composition}

For score-based diffusion policies, define the geometric composition at
each noise level as
\begin{equation} \label{eq:cfg_fixed_level}
    q_{w,t}(x_t\mid s)
    \propto
    \pi_{A,t}(x_t\mid s)^{1-w}
    \pi_{B,t}(x_t\mid s)^w.
\end{equation}
By ~\eqref{eq:cfg_fixed_level}, the composed score is exactly
$\nabla_{x_t}\log q_{w,t}$ at every fixed $t$.

If the family $\{q_{w,t}\}_{t=0}^T$ is approximately consistent with the
shared forward diffusion and its terminal marginal is close to the common
diffusion prior, then the endpoint distribution induced by the composed
reverse sampler satisfies
\begin{equation}
    \pi_w^{\mathrm{sam}}(a\mid s)
    \approx
    q_{w,0}(a\mid s)
    \propto
    \pi_A(a\mid s)^{1-w}
    \pi_B(a\mid s)^w.
    \label{eq:endpoint_approx}
\end{equation}
\end{proposition}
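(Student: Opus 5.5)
The plan has two parts: an exact pointwise identity at each noise level, and an approximate endpoint statement that rests on the path-consistency hypothesis and a stability estimate for the reverse sampler.

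\emph{Exact score identity.} Take the logarithm of \eqref{eq:cfg_fixed_level} at a fixed $t$:
\[
\log q_{w,t}(x_t\mid s)=(1-w)\log\pi_{A,t}(x_t\mid s)+w\log\pi_{B,t}(x_t\mid s)-\log Z_{w,t}(s).
\]
The normalizer $Z_{w,t}(s)=\int \pi_{A,t}^{1-w}\pi_{B,t}^{w}\,dx_t$ does not depend on $x_t$. We assume it is finite; for $w\in[0,1]$ this follows from H\"older's inequality, while for $w>1$ it is an assumption. Differentiating in $x_t$ gives
\[
\nabla_{x_t}\log q_{w,t}=(1-w)\nabla\log\pi_{A,t}+w\nabla\log\pi_{B,t},
\]
which is exactly the composed field \eqref{eq:cfg_policy_field} when $v_A,v_B$ are the (learned) scores. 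This proves the first claim.

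\emph{Endpoint claim.} We make the hypothesis precise as follows: there is a forward diffusion with marginals $p_t$ such that $p_0=q_{w,0}$ and $p_t\approx q_{w,t}$ for all $t$, with the error measured in Fisher-divergence sense, $\epsilon_{\mathrm{cons}}^2:=\int_0^T\mathbb{E}_{p_t}\|\nabla\log q_{w,t}-\nabla\log p_t\|^2dt$. The terminal condition is $\mathrm{KL}(p_T\,\|\,\mathcal N(0,I))\le\epsilon_{\mathrm{prior}}$.

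Compare two reverse SDEs driven by the same Brownian motion. The first is the ideal reverse process of $p_t$, which uses the true score $\nabla\log p_t$ and is started at $p_T$; its time-$0$ law is exactly $q_{w,0}$ by Anderson's time-reversal theorem. The second is the sampler, which uses $\nabla\log q_{w,t}$ and is started at $\mathcal N(0,I)$.

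Apply Girsanov's theorem together with the data-processing inequality (the standard argument of Chen et al./Song et al.):
\[
\mathrm{KL}\bigl(q_{w,0}\,\|\,\pi_w^{\mathrm{sam}}\bigr)\le \mathrm{KL}(p_T\,\|\,\mathcal N(0,I))+\tfrac12\int_0^T g(t)^2\,\mathbb{E}_{p_t}\|\nabla\log p_t-\nabla\log q_{w,t}\|^2dt.
\]
Both terms are small by hypothesis, so Pinsker's inequality gives $\pi_w^{\mathrm{sam}}\approx q_{w,0}$ in total variation. Finally, at $t=0$ we have $\pi_{A,0}=\pi_A(\cdot\mid s)$ and $\pi_{B,0}=\pi_B(\cdot\mid s)$, so $q_{w,0}\propto\pi_A^{1-w}\pi_B^w$. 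This is \eqref{eq:endpoint_approx}. Any discretization error and score-estimation error of $v_A,v_B$ enter additively in the same bound.

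\emph{Main obstacle.} The key point is that $\{q_{w,t}\}$ is generally \emph{not} the forward diffusion of $q_{w,0}$: a geometric mean of the noised densities is not the noised geometric mean. This is the known failure mode of composing scores, cf.\ \citet{du2023reduce}. The proposition therefore cannot be proved unconditionally.

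The work lies in stating the consistency hypothesis in a form that makes the Girsanov step go through, namely as the integrated score discrepancy above rather than a vague "$\approx$". The expectation must also be taken under $p_t$, so the bound should be phrased in terms of $p_t$ and not $q_{w,t}$.

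To indicate that the hypothesis is not vacuous, check the exact case: if $\pi_A$ and $\pi_B$ are Gaussians with a shared covariance under an OU forward process, then $q_{w,t}$ is exactly the diffused $q_{w,0}$ and the error is zero. The small-noise regime, where $\pi_{\cdot,t}\approx\pi_{\cdot,0}$ and the discrepancy is $O(\sigma_t^2)$, is similarly benign.
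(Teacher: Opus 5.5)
Your fixed-level argument is the paper's argument: take logs, note the normalizer is independent of $x_t$, and differentiate. The H\"older remark on the finiteness of $Z_{w,t}$ is a small addition.

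For the endpoint claim you take a genuinely different route. The paper does not prove a quantitative bound there. It differentiates $q_{w,t}$ in $t$ to get $\partial_t q_{w,t}=\mathcal L_t^\ast q_{w,t}+q_{w,t}\bigl(R_{w,t}-\mathbb E_{q_{w,t}}[R_{w,t}]\bigr)$, with $R_{w,t}=\tfrac{g_t^2}{2}w(1-w)\|s_{B,t}-s_{A,t}\|_2^2$. It then defines $\epsilon_{\mathrm{pc}}$ as the time-integrated $L^1(q_{w,t})$ fluctuation of $R_{w,t}$, and simply asserts the endpoint approximation when $\epsilon_{\mathrm{pc}}$ is small and $q_{w,T}$ is near the prior.

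Your Girsanov/data-processing step does deliver an actual KL (hence TV) bound, and it absorbs score-estimation and discretization error for free. The cost is that your hypothesis is phrased through $\nabla\log\mathcal T_t[q_{w,0}]$. This cannot be evaluated from $v_A,v_B$, and it amounts to assuming outright that the composed score is accurate for the diffused target. Three smaller points:
\begin{itemize}
\item With the shared forward SDE, $p_t$ is forced to equal $\mathcal T_t[q_{w,0}]$, so the existential in your hypothesis adds nothing.
\item $\epsilon_{\mathrm{cons}}^2$ needs the weight $g(t)^2$ to match the bound; this is harmless for bounded VP schedules.
\item You have moved the terminal hypothesis from $q_{w,T}$, as the proposition states it, to $p_T$.
\end{itemize}

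The paper's residual, by contrast, depends only on the two component scores and says when consistency holds. It vanishes exactly when $\|s_{B,t}-s_{A,t}\|^2$ is constant in $x_t$, which is your shared-covariance Gaussian example, or when $w\in\{0,1\}$. It can also be closed rigorously without Girsanov. The curve $\tau\mapsto q_{w,T-\tau}$ solves the composed sampler's reverse Fokker--Planck equation up to the source term $-q_{w,t}\bigl(R_{w,t}-\mathbb E_{q_{w,t}}[R_{w,t}]\bigr)$. Duhamel's formula together with $L^1$-contractivity of the Fokker--Planck propagator then gives, under standard regularity, $\mathrm{TV}(\pi_w^{\mathrm{sam}},q_{w,0})\le\mathrm{TV}\bigl(\mathcal N(0,I),q_{w,T}\bigr)+\tfrac12\epsilon_{\mathrm{pc}}$. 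That bound uses the terminal condition on $q_{w,T}$ exactly as the proposition states it.
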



The geometric composition is exact at each fixed noise level, but need not
form an exact diffusion marginal path across time. Appendix~\ref{app:cfg_composition}
analyzes this path-consistency gap and provides the conditions underlying the
endpoint approximation.

\subsection{Online RL: Preserve Exploration, Learn Exploitation}
\label{sec:online_cfg}

In online RL, reward optimization may progressively reduce policy diversity, hence harming exploration. CFG-Policy maintains a frozen exploration policy $\pi_E$ to keep a stable ability of exploration and learns an exploitation policy $\pi_X$ from task rewards.
During environment interaction, we use

\begin{equation}
v_w=(1-w)v_E+w v_X .
\label{eq:online_cfg}
\end{equation}

The composed policy $\pi_w$ serves as the behavior policy for data collection, while the critic and $\pi_X$ are updated using the original objectives of the base RL algorithm. The exploration branch remains fixed throughout training, providing a persistent source of exploratory behavior.

The guidance coefficient directly controls this trade-off. Under a simple sharpening assumption, its effect can be characterized through policy entropy.

\begin{theorem}[Entropy of the Geometric Composition]
\label{thm:entropy}

For a fixed state $s$, define the endpoint geometric-composition surrogate
\begin{equation}
    q_w(a\mid s)
    \propto
    \pi_E(a\mid s)^{1-w}
    \pi_X(a\mid s)^w .
    \label{eq:geometric_policy}
\end{equation}
Assume that the exploitation policy is a sharpened version of the
exploration policy,
\begin{equation}
    \pi_X(a\mid s)
    =
    \frac{
        \pi_E(a\mid s)^\beta
    }{
        Z_\beta(s)
    },
    \qquad
    \beta > 1.
    \label{eq:sharpening}
\end{equation}
Then, for $w_1 < w_2$,
\begin{equation}
    H\!\left(q_{w_1}(\cdot\mid s)\right)
    \ge
    H\!\left(q_{w_2}(\cdot\mid s)\right).
    \label{eq:entropy_monotonic}
\end{equation}
\end{theorem}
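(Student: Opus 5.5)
Under the sharpening assumption \eqref{eq:sharpening}, the geometric composition collapses to a one-parameter power family of $\pi_E$, and the claim becomes the standard fact that tempered distributions lose entropy as the inverse temperature grows. Substituting, we get
\begin{equation*}
\pi_E^{1-w}\pi_X^{w} = \pi_E^{1-w+w\beta} Z_\beta^{-w},
\end{equation*}
and the constant $Z_\beta^{-w}$ is absorbed by normalization. Hence $q_w = p_{\gamma(w)}$, where $p_\gamma(a\mid s) = \pi_E(a\mid s)^\gamma / Z(\gamma)$, $Z(\gamma) = \int \pi_E^\gamma\,da$, and $\gamma(w) = 1 + w(\beta - 1)$. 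Since $\beta > 1$, the map $\gamma(w)$ is strictly increasing in $w$. It therefore suffices to show that $\gamma \mapsto H(p_\gamma)$ is nonincreasing on the range of $\gamma$ for which $Z(\gamma)$ is finite. We implicitly restrict to such $w$, for which the composition is well defined, e.g.\ $\gamma > 0$.

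\textbf{Step 1: exponential-family form.} Set $\phi(a) = \log \pi_E(a\mid s)$. Then $p_\gamma \propto e^{\gamma\phi}$ is an exponential family with sufficient statistic $\phi$ and log-partition function $A(\gamma) = \log Z(\gamma)$. The entropy is
\begin{equation*}
H(p_\gamma) = -\mathbb{E}_{p_\gamma}[\gamma\phi - A(\gamma)] = A(\gamma) - \gamma A'(\gamma),
\end{equation*}
using $A'(\gamma) = \mathbb{E}_{p_\gamma}[\phi]$.

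\textbf{Step 2: differentiate.} Differentiating in $\gamma$ gives
\begin{equation*}
\frac{d}{d\gamma} H(p_\gamma) = A' - A' - \gamma A''(\gamma) = -\gamma\,\mathrm{Var}_{p_\gamma}(\phi) \le 0 \quad \text{for } \gamma > 0.
\end{equation*}
Integrating from $\gamma(w_1)$ to $\gamma(w_2)$ then yields \eqref{eq:entropy_monotonic}. The same argument works for discrete action spaces with sums in place of integrals.

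\textbf{Main obstacle: regularity.} The computation itself is routine; the care lies in justifying differentiation under the integral. I would assume that $Z(\gamma)$ is finite on an open interval containing $[\gamma(w_1), \gamma(w_2)]$. Standard exponential-family theory then makes $A$ smooth (indeed analytic) there, with $A' = \mathbb{E}[\phi]$ and $A'' = \mathrm{Var}(\phi)$, so both derivatives are finite.

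To avoid differentiability issues entirely, a fallback is a direct comparison. With $\gamma_1 < \gamma_2$, rewrite
\begin{equation*}
H(p_{\gamma_1}) - H(p_{\gamma_2}) = \int_{\gamma_1}^{\gamma_2} \gamma\,\mathrm{Var}_{p_\gamma}(\phi)\,d\gamma,
\end{equation*}
which is nonnegative. Both this identity and the derivative formula depend on the same finiteness of the partition function, so that assumption is what must be made explicit. Finally, I would note that $\gamma$ should stay positive, i.e.\ $w > -1/(\beta - 1)$; this covers the intended range $w \in [0,1]$ and beyond.
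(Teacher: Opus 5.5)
Your proof is correct and follows the paper's argument essentially step for step. The paper likewise reduces $q_w$ to the power family $\pi_E^{\alpha(w)}/Z(\alpha)$ with $\alpha(w)=1+w(\beta-1)$, uses $\partial_\alpha \log Z=\mathbb{E}_{q_w}[\log\pi_E]$ to get $\partial H/\partial\alpha=-\alpha\,\mathrm{Var}_{q_w}[\log\pi_E]\le 0$, and concludes by the chain rule with $\partial\alpha/\partial w=\beta-1>0$. Your explicit caveats, finiteness of $Z$ and $\gamma>0$ (i.e.\ $w>-1/(\beta-1)$), are assumptions the paper uses silently; the positivity condition is genuinely needed, since for $\gamma<0$ the derivative becomes nonnegative and the monotonicity reverses.
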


\begin{algorithm}[t]
\caption{\textsc{CFG-Policy} for Online Reinforcement Learning}
\label{alg:online}
\begin{algorithmic}[1]
\Require Frozen $\pi_E$; trainable $\pi_X$; critic $Q$; replay buffer $\mathcal{B}$;
base RL algorithm $\mathcal{A}$; guidance weight $w$
\State Initialize $\pi_X,Q,\mathcal{B}$ and freeze $\pi_E$
\For{each environment step}
    \State Sample $a\sim\pi_w(\cdot|s)$ using
    $v_w\leftarrow(1-w)v_E+wv_X$
    \State Execute $a$ and store $(s,a,r,s',d)$ in $\mathcal{B}$
    \State Update $Q$ and $\pi_X$ using the original objectives of $\mathcal{A}$
\EndFor
\State \textbf{Evaluation:} set $w=1$ and execute $\pi_X$
\end{algorithmic}
\end{algorithm}

Thus, increasing $w$ makes the geometric-composition surrogate more
concentrated toward the exploitation policy. Under the path-consistency
conditions of Proposition~\ref{prop:endpoint_composition}, the sampler-induced
endpoint policy $\pi_w^{\mathrm{sam}}$ is approximated by $q_w$, providing
an idealized explanation of how guidance controls exploration and exploitation.
The algorithm for online RL is summarized in Algorithm \ref{alg:online} and the proof is provided in Appendix~\ref{app:entropy_proof}.


\subsection{RL Post-Training: Preserve Pretrained Behavior, Learn Adaptation} \label{sec:posttraining_cfg}

RL post-training presents the complementary setting: a pretrained generative policy already provides useful behavior, while direct RL updates may alter its parameters. CFG-Policy therefore freezes the pretrained policy $\pi_X^0$ and optimizes an additional steering policy $\pi_E^\theta$:

\begin{equation}
v_w^\theta
=
(1-w)v_E^\theta+w v_X^0.
\label{eq:posttraining_cfg}
\end{equation}

The composed policy is used for rollout and, when required, for likelihood recomputation. The base RL objective is kept unchanged, while gradients are stopped through $v_X^0$ and only $\pi_E^\theta$ is optimized. Unlike action-space residuals, the trainable branch steers the full generative trajectory while leaving the pretrained action generator unchanged.

The two regimes are therefore symmetric: online RL preserves exploration while learning exploitation, whereas RL post-training preserves a pretrained policy while learning adaptive behavior. The composition mechanism itself remains unchanged. The algorithm for RL post-training is summarized in Algorithm \ref{alg:posttraining}.

\begin{algorithm}[t]
\caption{\textsc{CFG-Policy} for RL Post-Training}
\label{alg:posttraining}
\begin{algorithmic}[1]
\Require Frozen pretrained $\pi_X^0$; trainable steering policy $\pi_E^\theta$;
base RL algorithm $\mathcal{A}$; guidance weight $w$
\State Freeze $\pi_X^0$ and initialize $\pi_E^\theta$
\For{each RL iteration}
    \State Construct $v_w^\theta\leftarrow(1-w)v_X^0+w v_E^\theta$
    \State Collect rollouts with the composed policy
    \State Compute the original RL objective of $\mathcal{A}$
    \State Stop gradients through $\pi_X^0$ and update only $\pi_E^\theta$
\EndFor
\State \Return $(\pi_X^0,\pi_E^\theta,w)$
\end{algorithmic}
\end{algorithm}

\subsection{Adaptive Composition} \label{sec:adaptive_cfg}

A fixed guidance weight may not provide an appropriate exploration--exploitation balance throughout training. Inspired by TRPO \citep{schulman2015trust}, we therefore adapt a global weight $w$ using a critic-value surrogate subject to a lower bound on a pathwise discrepancy from pure exploitation:
\begin{equation}
\max_{w\in(0,1)} J_Q(w)
\qquad
\mathrm{s.t.}
\qquad
D_{\mathrm{path}}(w)\geq\delta .
\label{eq:adaptive_w}
\end{equation}
The constraint is intended to discourage premature collapse toward pure exploitation. In another word, we would like the policy to output an action with larger Q value while behave differently from the purely exploiting policy. 

At each control update, the component policies and the critic are held fixed. For states drawn from the replay distribution $\rho_{\mathcal B}$, the value surrogate evaluates the actions produced by the guided sampler, including critic-based candidate selection when used:
\begin{equation}
J_Q(w)
=
\mathbb{E}_{s\sim\rho_{\mathcal B},\,\xi}
\left[
\max_{n=1,\ldots,N}
Q_{\phi}\!\left(s,a_w^{(n)}\right)
\right],
\end{equation}
where $\xi$ denotes sampling randomness and $N$ is the number of action candidates.

Using the conditional-KL decomposition of diffusion paths developed in DPOK and BDPO \citep{gao2025behavior,fan2023dpok}, we define a pathwise discrepancy between the composed and pure exploitation samplers. We estimate the trajectory expectation by Monte Carlo sampling, normalize by the action dimension, and aggregate across states using the median. Along each trajectory, we compare the composed and pure-exploitation reverse-transition means at the same intermediate state:


\begin{equation}
\widehat D_{\mathrm{path}}(w)
=
\operatorname{median}_{s\in\mathcal S}
\left[
\frac{1}{d_a RN}
\sum_{r=1}^{R}\sum_{n=1}^{N}\sum_{t=0}^{T-1}
\frac{
\left\|
\mu_t^{w}(x_t^{(r,n)},s)
-
\mu_t^{1}(x_t^{(r,n)},s)
\right\|_2^2
}{
2\sigma_t^2
}
\right].
\label{eq:dpath}
\end{equation}

Here, $\mathcal S$ is a batch of replay states and $d_a$ is the
action dimension. At stochastic steps, $\sigma_t^2$ is the common
reverse-transition variance. At the deterministic terminal step,
we use a positive auxiliary variance solely for discrepancy
evaluation without additional noise to the executed actions.
The resulting median statistic measures an auxiliary pathwise
discrepancy before candidate selection, rather than the KL
divergence between the final executed-action distributions.

The threshold $\delta$ is calibrated after a shared warmup using a prespecified reference weight and is subsequently held fixed. We use the Lagrangian $\mathcal L(w,\lambda)=J_Q(w)+\lambda\bigl(D_{\mathrm{path}}(w)-\delta\bigr), \lambda\geq0,$ with local ascent updates for $w$ and dual descent updates for$\lambda$. Finite-difference estimates at neighboring weights reuse the same states and sampling noise. These updates optimize a critic surrogate and do not guarantee monotonic improvement in realized return. Details of the discrepancy estimator, calibration, and scalar updates are provided in Appendix~\ref{app:adaptive}.

\section{Experiments}\label{exp}


We evaluate CFG-Policy in the two complementary regimes introduced in the Method: online RL and RL post-training. Our experiments study whether
(i) CFG-Policy can be applied to different generative RL algorithms without changing their original objectives,
(ii) the same compositional mechanism is effective in both learning regimes, and
(iii) adaptive generative composition is preferable to fixed guidance or simple action-space blending.

\subsection{Experimental Setup}

\paragraph{Online RL.} We evaluate on five MuJoCo tasks: HalfCheetah-v4, Walker2d-v4, Ant-v4, Humanoid-v4, and Hopper-v4. CFG-Policy is instantiated with QSM~\citep{psenka2024qsm} and QVPO~\citep{ding2024qvpo}. The original actor and critic objectives are unchanged. The exploitation branch is inherited from the base algorithm, while a separately pretrained RND exploration policy~\citep{burda2018exploration}, which we use it for the exploration branch, is kept frozen during RL training. We report mean performance and sample standard deviation over 5 runs.

\paragraph{RL post-training.} We evaluate on four RoboTwin tasks~\citep{chen2025robotwin}: Adjust Bottle, Pick Dual Bottles, Rotate QR Code, and Grab Roller. For both $\pi_0$ and $\pi_{0.5}$ \citep{black2024pi0,intelligence2025pi_}, the pretrained policy is frozen as the exploitation branch, while a second branch initialized from the same SFT checkpoint is optimized with RL. We compare against direct Flow-SDE+PPO fine-tuning and report task success rates. 

We provide introduction of the tasks of benchmark and further implementation details in Appendix \ref{app:environmental_details} and \ref{app:impl_details}.

\begin{table*}[t]
\centering
\caption{Results across five environments.
Values are mean $\pm$ sample standard deviation over 5 runs. The maximum value for each task is shown in bold.}
\label{tab:online_results}
\resizebox{\textwidth}{!}{%
\begin{tabular}{lccccc}
\toprule
Method & HalfCheetah-v4 & Walker2d-v4 & Ant-v4 & Humanoid-v4 & Hopper-v4 \\
\midrule
QSM Original
& $5020 \pm 455$ & $2041 \pm 477$ & $-24 \pm 3$
& $5922 \pm 262$ & $2753 \pm 307$ \\
QSM CFG-Policy
& $6218 \pm 533$ & $2349 \pm 583$ & $1425 \pm 308$
& $6094 \pm 169$ & $3101 \pm 461$ \\
Improvement
& $+23.9\%$ & $+15.1\%$ & $+6141.3\%$
& $+2.90\%$ & $+12.7\%$ \\
\midrule
QVPO Original
& $11166 \pm 5076$ & $5220 \pm 689$ & $7129 \pm 61$
& $6612 \pm 214$ & $3727 \pm 176$ \\
QVPO CFG-Policy
& $\mathbf{12990 \pm 1993}$ & $\mathbf{5692 \pm 413}$
& $\mathbf{7374 \pm 45}$ & $\mathbf{7041 \pm 385}$
& $\mathbf{3871 \pm 192}$ \\
Improvement
& $+16.3\%$ & $+9.0\%$ & $+3.4\%$
& $+6.49\%$ & $+3.8\%$ \\
\bottomrule
\end{tabular}%
}
\end{table*}

\begin{figure}
    \centering
    \includegraphics[width=1\linewidth]{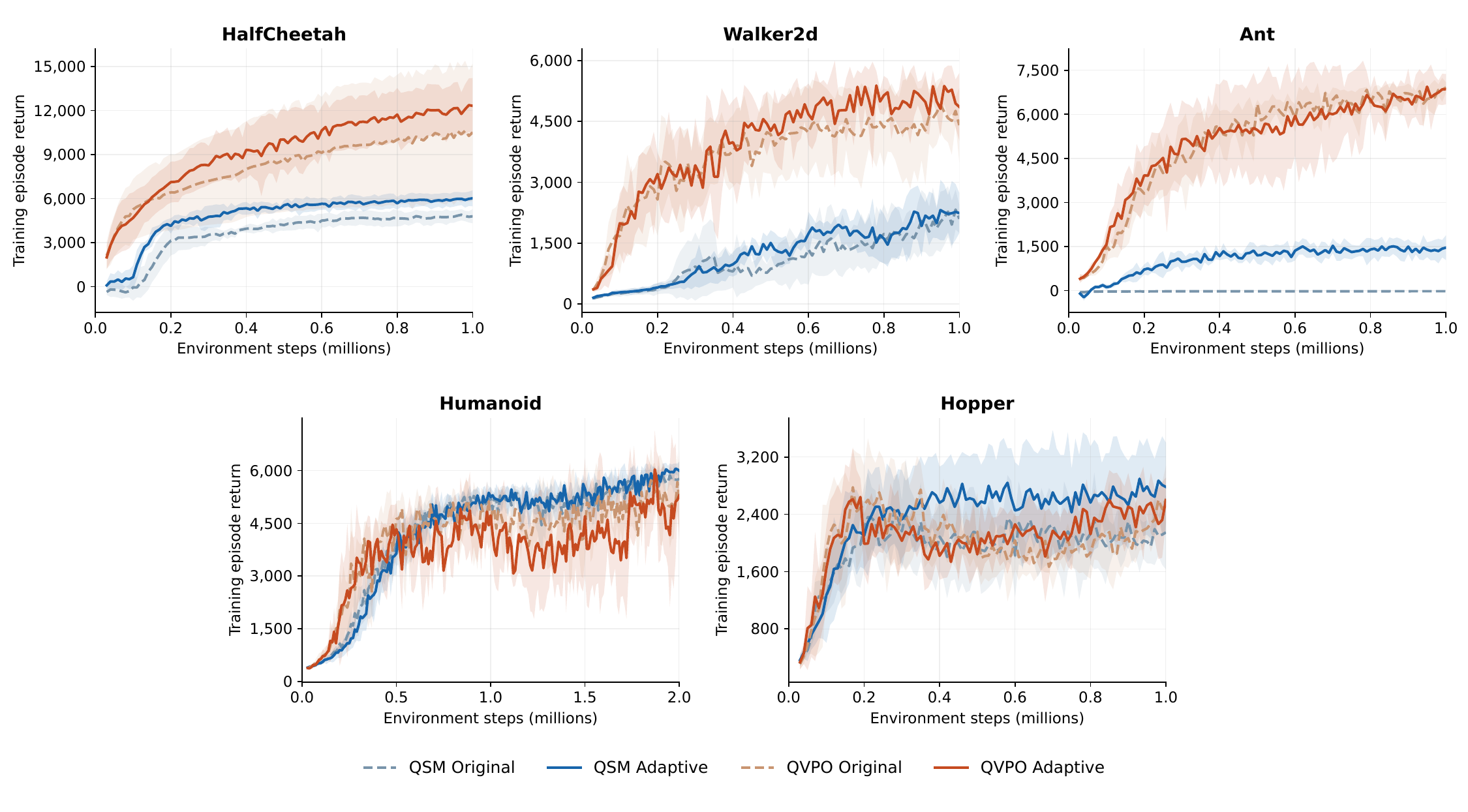}
    \caption{Learning curve of QSM, QVPO before and after using CFG-Policy on five MuJoCo tasks. The x-axis is the environment steps and the y-axis shows average episodic return with one standard deviation shaded.}
    \label{fig:online_curves}
\end{figure}





\subsection{Online Reinforcement Learning}

As shown in Table \ref{tab:online_results}, CFG-Policy consistently improves both QSM and QVPO on all evaluated tasks. For QSM, the improvement is particularly significant on Ant-v4, where CFG-Policy increases the return from $-24$ to $1425 \pm 308$. It also improves HalfCheetah-v4, Walker2d-v4, and Hopper-v4, demonstrating that preserving a separate exploration branch can provide additional exploration capability during reward optimization.

The benefit is not limited to weaker baselines. Starting from the stronger QVPO baseline, CFG-Policy further improves performance across all five environments, including HalfCheetah-v4 and Humanoid-v4. Figure \ref{fig:online_curves} shows consistent improvements throughout training, demonstrating that compositional sampling enhances existing RL objectives without modification. Despite introducing an additional generative branch, CFG-Policy incurs only marginal computational overhead in memory and training time. We provide the cost details in Appendix \ref{app:cost}.






\begin{table}[t]
\centering
\caption{Success rates of the original Flow-SDE + PPO and CFG-Policy.}
\label{tab:posttraining_results}
\begin{tabular}{llcccc}
\toprule
Model & Method
& Adjust Bottle
& Pick Dual Bottles
& Rotate QR Code
& Grab Roller \\
\midrule
\multirow{2}{*}{$\pi_0$}
& FlowSDE+PPO
& 82.81\% & 43.75\% & 34.38\% & 92.19\% \\
& CFG-Policy
& \textbf{98.44\%}
& \textbf{57.81\%}
& \textbf{42.19\%}
& \textbf{93.75\%} \\
\midrule
\multirow{2}{*}{$\pi_{0.5}$}
& FlowSDE+PPO
& 89.06\% & 12.50\% & 7.81\% & \textbf{100.00\%} \\
& CFG-Policy
& \textbf{92.19\% }
& \textbf{15.62\%}
& \textbf{10.94\% }
& \textbf{100.00\%} \\
\bottomrule
\end{tabular}
\end{table}

\subsection{RL Post-Training}

We next evaluate CFG-Policy for RL post-training with pretrained generative policies. Unlike online RL, where the exploration branch is preserved, here the pretrained exploitation policy is frozen and a separate steering policy is optimized.

Table \ref{tab:posttraining_results} compares CFG-Policy with direct Flow-SDE+PPO fine-tuning on four RoboTwin manipulation tasks using both $\pi_0$ and $\pi_{0.5}$. CFG-Policy improves non-saturated tasks while maintaining saturated performance.

For $\pi_0$, CFG-Policy increases the success rate on Adjust Bottle from $82.81\%$ to $98.44\%$, and on Pick Dual Bottles from $43.75\%$ to $57.81\%$. Similar improvements are observed for $\pi_{0.5}$ on the non-saturated tasks, while both methods reach full success on Grab Roller.

These results show that generative composition enables policy adaptation through an additional trainable branch while preserving the original pretrained action generator.




\begin{figure}
    \centering
    \includegraphics[width=0.9\linewidth]{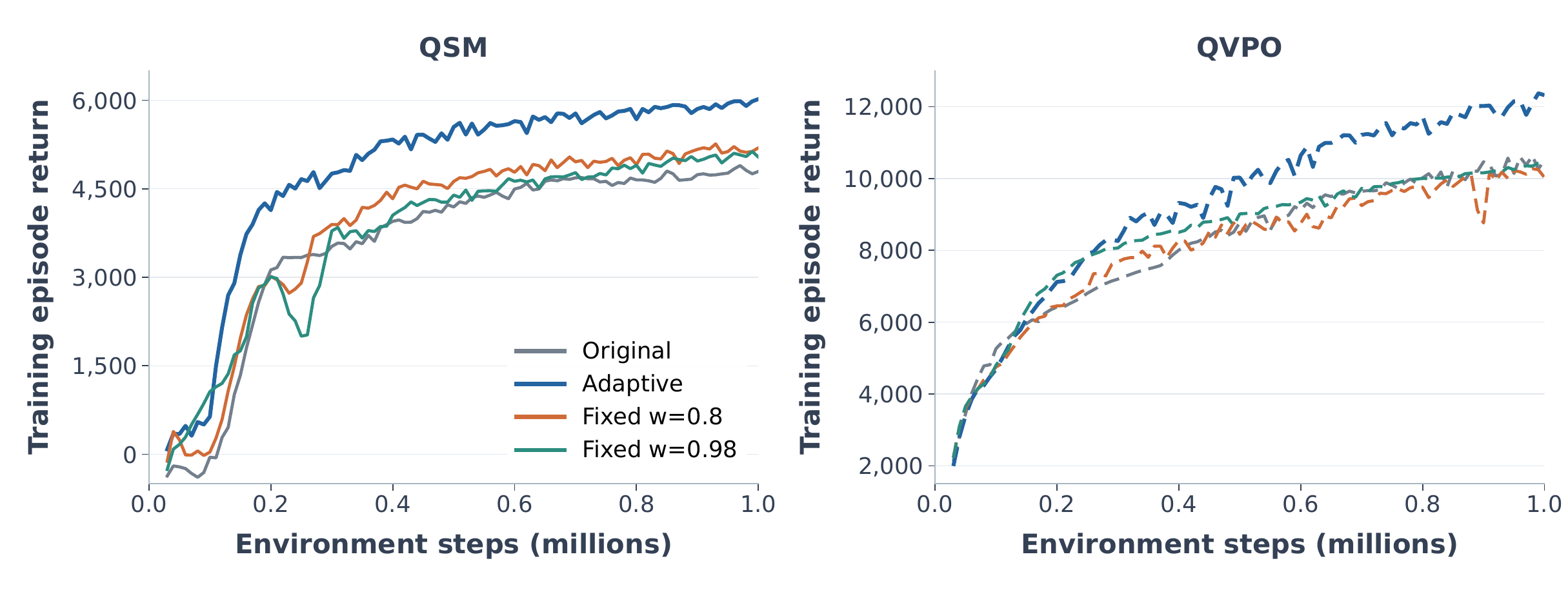}
    \caption{Comparison of adaptive guidance with two fixed guidance coefficients. The dashed line corresponds to QVPO and the solid line corresponds to QSM.}
    \label{fig:ablation}
\end{figure}

\subsection{Ablation Study}

We investigate two design choices of CFG-Policy:
(1) adaptive guidance coefficient selection and 
(2) generative-space composition compared with action-space interpolation.

\textbf{Adaptive versus fixed guidance.}
Figure 4 compares adaptive guidance with fixed coefficients $w\in\{0.8,0.98\}$ on HalfCheetah-v4.
For both QSM and QVPO, adaptive guidance achieves higher final training returns than the tested fixed coefficients.
This indicates that dynamically adjusting the exploration--exploitation trade-off can provide a more flexible alternative than using a single manually selected guidance coefficient throughout training.

\textbf{Generative composition versus action-space blending.}
We further compare CFG-Policy with directly interpolating actions sampled from the exploration and exploitation policies.
As shown in Table 4, action-space blending consistently underperforms the original QVPO policy, even when using a large exploitation coefficient.

In contrast, CFG-Policy combines generative fields during the sampling trajectory, allowing both policies to influence the denoising process. The results highlight the advantage of composing policies in the generative space rather than interpolating independently generated actions.

\begin{table}[t]
\centering
\small
\caption{Action-space blending ablation on HalfCheetah-v4. Results are the mean and sample standard deviation across five seeds, with ten evaluation episodes per seed.}
\label{tab:halfcheetah_action_blend}
\begin{tabular}{ccc}
\toprule
Blend coefficient $a$ & Return & Change vs.\ original QVPO \\
\midrule
$0$    & $-599 \pm 171$                   & $-105.44\%$ \\
$0.2$  & $-362 \pm 127$                   & $-103.29\%$ \\
$0.5$  & $883 \pm 608$                    & $-91.98\%$ \\
$0.8$  & $4078 \pm 1132$            & $-62.96\%$ \\
$0.98$ & $10642 \pm 4787$           & $-3.33\%$ \\
$1.0$  & $11166 \pm 5076$ & -- \\
\bottomrule
\end{tabular}
\label{tab:ablation}
\end{table}

\section{Conclusion and Limitation}
\label{sec:conclusion}
In this paper, we introduced \textbf{CFG-Policy} to address exploration--exploitation challenges across online diffusion RL and offline-to-online RL post-training. By composing separate generative policies during diffusion or flow sampling, it preserves one behavioral capability while the other is optimized using the original RL objective. In online RL, a frozen exploration branch sustains behavioral diversity as exploitation improves; in post-training, a frozen pretrained policy retains its acquired capabilities while a trainable branch steers behavior. With QSM and QVPO on 5 MuJoCo tasks, CFG-Policy achieves higher final returns and faster convergence, improving sample efficiency. With Flow-SDE+PPO on 4 RoboTwin manipulation tasks, it improves or matches task success rates. These results support CFG-Policy as a plug-and-play sampling interface for both learning regimes.
Nevertheless, the preferred guidance weight \(w\) varies substantially across tasks. Although our adaptive strategy reduces the need for manual selection, task-specific tuning can still find weights that outperform it. Developing more effective task-aware guidance adaptation is therefore an important direction for future work.

\clearpage



\section*{Acknowledgment}
This work was supported by the National Natural Science Foundation of China (62303319, 62406195), HPC Platform of ShanghaiTech University, and MoE Key Laboratory of Intelligent Perception and Human-Machine Collaboration (ShanghaiTech University), Shanghai Engineering Research Center of Intelligent Vision and Imaging. This work was also supported in part by computational resources provided by Fcloud CO., LTD.

\bibliography{iclr2027_conference}
\bibliographystyle{iclr2027_conference}

\appendix


\newpage
\appendix

\section{Environmental Details}
\label{app:environmental_details}

We evaluate on nine continuous-control tasks spanning two benchmarks:
five MuJoCo locomotion environments and four RoboTwin manipulation
environments (Figure~\ref{fig:benchmarks}). The two domains probe
complementary aspects of policy learning: online interaction in
locomotion and reinforcement-learning post-training in object
manipulation.

\begin{figure}[ht]
    \centering
    \includegraphics[width=\linewidth]{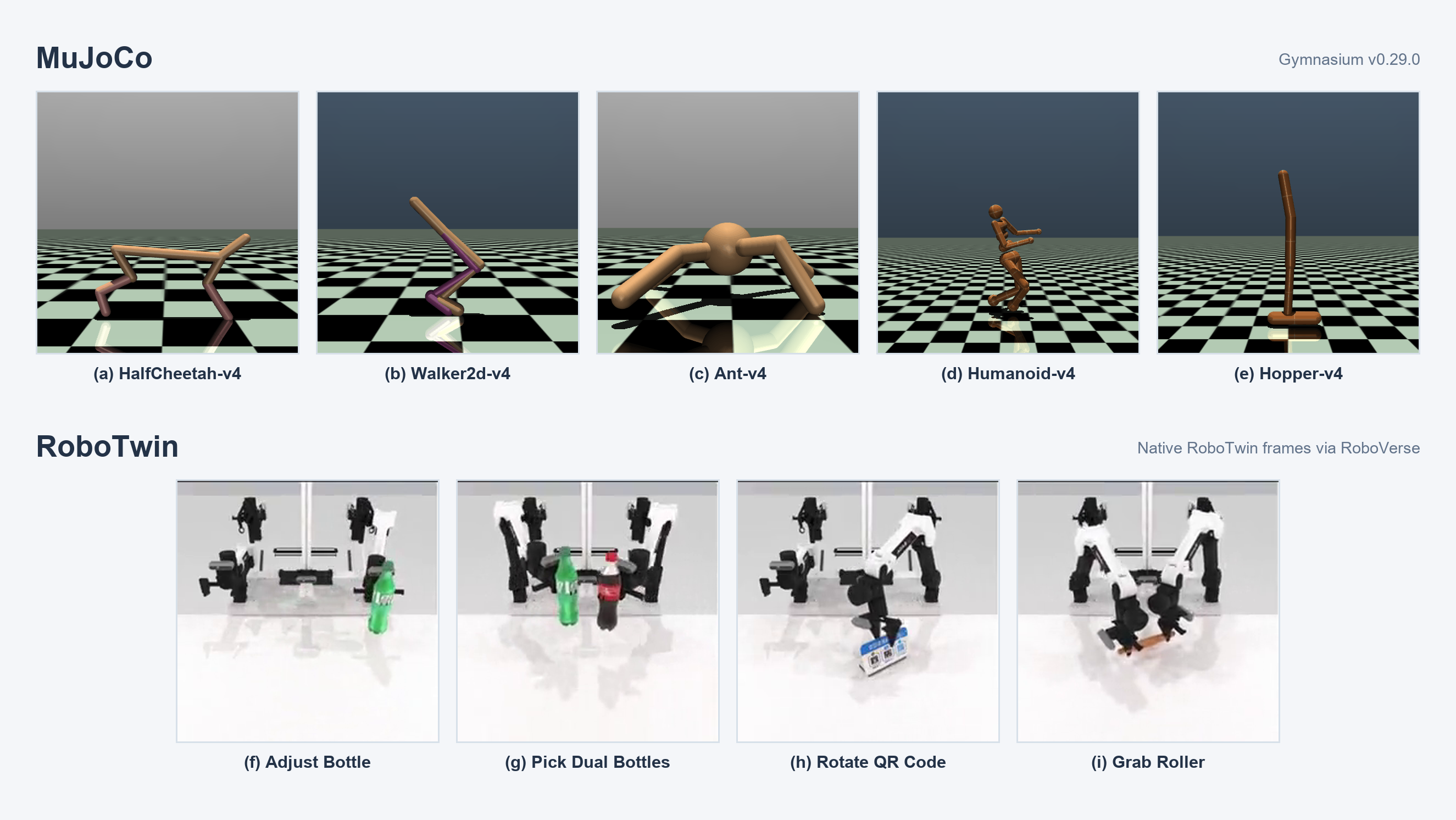}
    \caption{Evaluation environments. (a) HalfCheetah-v4,
    (b) Walker2d-v4, (c) Ant-v4, (d) Humanoid-v4, and
    (e) Hopper-v4 are MuJoCo locomotion tasks. (f) Adjust Bottle,
    (g) Pick Dual Bottles, (h) Rotate QR Code, and (i) Grab Roller
    are RoboTwin manipulation tasks.}
    \label{fig:benchmarks}
\end{figure}

\paragraph{MuJoCo locomotion.}
We use the \texttt{v4} versions of five Gymnasium MuJoCo tasks. Each agent receives a continuous observation and
controls its joints through continuous actions. The tasks reward
forward movement while imposing task-dependent control and stability
requirements:
\begin{itemize}
    \item \textbf{HalfCheetah-v4} ($d_o=17$, $d_a=6$): coordinate the
    front and rear legs of a planar cheetah-like robot to run forward
    rapidly while limiting control effort.
    \item \textbf{Walker2d-v4} ($d_o=17$, $d_a=6$): generate a stable
    alternating gait for a planar biped and maintain an upright torso.
    \item \textbf{Ant-v4} ($d_o=27$, $d_a=8$): coordinate four articulated
    legs to move a three-dimensional quadruped forward while keeping
    its body stable.
    \item \textbf{Humanoid-v4} ($d_o=376$, $d_a=17$): control a
    three-dimensional humanoid to move forward without falling,
    requiring coordinated whole-body motion.
    \item \textbf{Hopper-v4} ($d_o=11$, $d_a=3$): propel a planar
    single-legged robot forward through repeated balanced hops.
\end{itemize}
Here $d_o$ and $d_a$ denote the default observation and action
dimensions, respectively.

\paragraph{RoboTwin manipulation.}
We select four RoboTwin tasks that cover single-arm object
reorientation and coordinated dual-arm grasping \cite{chen2025robotwin}:
\begin{itemize}
    \item \textbf{Adjust Bottle}: select the appropriate arm and pick
    up a bottle with its cap facing upward.
    \item \textbf{Pick Dual Bottles}: pick up two bottles, using one
    arm for each bottle.
    \item \textbf{Rotate QR Code}: grasp a QR-code payment sign and
    rotate it until the code faces the robot.
    \item \textbf{Grab Roller}: use both arms to grasp the same roller
    on the tabletop.
\end{itemize}

\paragraph{Evaluation protocol.}
For online reinforcement learning in MuJoCo, we report the mean
episodic return and sample standard deviation over five independent
runs. For RoboTwin post-training, we evaluate the optimized policy
using the task success rate. These metrics are reported separately
because episodic return and manipulation success measure different
outcomes.

\subsection{Algorithm Details} \label{app:algo_details}

\paragraph{CFG-Policy for Online Reinforcement Learning.}
We consider a fixed exploration policy $\pi_E$, a trainable exploitation policy $\pi_X$, a critic $Q$, a replay buffer $\mathcal{B}$, an underlying RL optimizer $\mathcal{A}$, and a guidance coefficient $w$. We first initialize $\pi_X$, $Q$, and $\mathcal{B}$, while keeping the parameters of $\pi_E$ frozen throughout training. At each environment step, both policies evaluate the same intermediate generative state conditioned on the current observation $s_t$, and their generative fields are composed as
\[
v_w=(1-w)v_E+w v_X.
\]
The original sampler is otherwise left unchanged, and the resulting composed policy $\pi_w$ is used to generate the behavior action $a_t$. After executing $a_t$, the transition $(s_t,a_t,r_t,s_{t+1})$ is stored in $\mathcal{B}$. The optimizer $\mathcal{A}$ then updates the critic and the exploitation policy using its original RL objective, with gradients blocked through $\pi_E$. Thus, CFG-Policy only modifies the sampling interface while preserving the optimization rule of the underlying RL algorithm. After training, the learned exploitation policy $\pi_X$ is returned.

\begin{algorithm}[ht]
\caption{CFG-Policy for Online Reinforcement Learning}
\label{alg:online_appendix}
\begin{algorithmic}[1]
\Require Frozen exploration policy \(\pi_E\), trainable exploitation policy
\(\pi_X\), critic \(Q\), replay buffer \(\mathcal{B}\), RL optimizer \(\mathcal{A}\),
guidance coefficient \(w\).
\State Initialize \(\pi_X\), \(Q\), and \(\mathcal{B}\).
\State Freeze parameters of \(\pi_E\).
\For{each environment interaction step}
    \State Observe state \(s_t\).
    \State Construct the composed generative field
    \(v_w=(1-w)v_E+w v_X\).
    \State Sample action \(a_t\sim\pi_w(\cdot|s_t)\) using the original sampler.
    \State Execute \(a_t\) and store
    \((s_t,a_t,r_t,s_{t+1})\) in \(\mathcal{B}\).
    \State Update \(Q\) and \(\pi_X\) using the original RL objective.
\EndFor
\State \textbf{return} exploitation policy \(\pi_X\).
\end{algorithmic}
\end{algorithm}

\paragraph{CFG-Policy for RL Post-Training.}
We consider a frozen pretrained exploitation policy $\pi_X^0$, a trainable steering policy $\pi_E^\theta$, an underlying RL optimizer $\mathcal{A}$, and a guidance coefficient $w$. The base policy $\pi_X^0$ is loaded from the pretrained checkpoint and kept fixed throughout RL post-training, while $\pi_E^\theta$ is initialized from the pretrained action-generation module so that the two branches start from compatible generative dynamics. At each RL iteration, both policies evaluate the same intermediate generative state under the same observation context, and their fields are composed as
\[
v_w=(1-w)v_X^0+w v_E^\theta.
\]
The resulting composed policy $\pi_w$ is then used to generate trajectories with the original generative sampler. The optimizer $\mathcal{A}$ computes its original RL objective from the collected trajectories and updates only the steering policy, with gradients blocked through the frozen base policy. In our implementation, we instantiate this procedure with Flow-SDE sampling and PPO with generalized advantage estimation, while the compositional mechanism itself is independent of these choices. After training, inference uses the frozen base policy together with the learned steering policy and the guidance coefficient $w$.

\begin{algorithm}[t]
\caption{CFG-Policy for RL Post-Training}
\label{alg:post_appendix}
\begin{algorithmic}[1]
\Require Frozen pretrained policy \(\pi_X^0\), trainable steering policy
\(\pi_E^\theta\), RL optimizer \(\mathcal{A}\), guidance coefficient \(w\).
\State Freeze parameters of \(\pi_X^0\).
\State Initialize \(\pi_E^\theta\) from the pretrained action-generation module.
\For{each RL iteration}
    \State Construct the composed generative field
    \(v_w=(1-w)v_X^0+w v_E^\theta\).
    \State Generate trajectories using the composed policy \(\pi_w\).
    \State Compute the original RL objective \(\mathcal{L}_{\mathrm{RL}}(\pi_w)\).
    \State Stop gradients through \(\pi_X^0\).
    \State Update only \(\pi_E^\theta\).
\EndFor
\State \textbf{return} composed policy \((\pi_X^0,\pi_E^\theta,w)\).
\end{algorithmic}
\end{algorithm}
\section{Theoretical Analysis of CFG Policy Composition}

\subsection{CFG Composition as Distribution Multiplication and Path Consistency}
\label{app:cfg_composition}

We provide the distributional interpretation of CFG composition for
score-based diffusion policies. The key distinction is that geometric
composition is exact at every fixed noise level, while the corresponding
family of densities is not necessarily an exact diffusion marginal path.

\paragraph{Fixed-noise-level composition.}

For a fixed state $s$, let
$p_{A,t}(x_t\mid s)$ and $p_{B,t}(x_t\mid s)$ denote the noisy marginals
of two diffusion policies at noise level $t$. Their scores are
\begin{equation}
    s_{A,t}(x_t,s)
    =
    \nabla_{x_t}\log p_{A,t}(x_t\mid s),
    \qquad
    s_{B,t}(x_t,s)
    =
    \nabla_{x_t}\log p_{B,t}(x_t\mid s).
\end{equation}

CFG-Policy composes the two scores as
\begin{equation}
    s_{w,t}(x_t,s)
    =
    (1-w)s_{A,t}(x_t,s)
    +
    w s_{B,t}(x_t,s).
    \label{eq:app_cfg_score}
\end{equation}

Define the geometric composition
\begin{equation}
    q_{w,t}(x_t\mid s)
    =
    \frac{
        p_{A,t}(x_t\mid s)^{1-w}
        p_{B,t}(x_t\mid s)^w
    }{
        Z_{w,t}(s)
    },
    \label{eq:app_qwt}
\end{equation}
where
\begin{equation}
    Z_{w,t}(s)
    =
    \int
    p_{A,t}(x_t\mid s)^{1-w}
    p_{B,t}(x_t\mid s)^w
    \, dx_t .
\end{equation}

Since the normalization term is independent of $x_t$,
\begin{align}
    \nabla_{x_t}\log q_{w,t}(x_t\mid s)
    &=
    (1-w)\nabla_{x_t}\log p_{A,t}(x_t\mid s)
    +
    w\nabla_{x_t}\log p_{B,t}(x_t\mid s)
    \\
    &=
    s_{w,t}(x_t,s).
    \label{eq:app_fixed_level_identity}
\end{align}

Therefore, the CFG-composed score is exactly the score of the geometric
composition $q_{w,t}$ at every fixed noise level.

\paragraph{Why fixed-level composition does not imply an exact endpoint product?}

The above identity does not imply that the family
$\{q_{w,t}\}_{t=0}^T$ is itself generated by applying the original forward
diffusion to the endpoint density $q_{w,0}$.

For the VP diffusion used in our score-based implementations, the forward
process can be written as
\begin{equation}
    x_t
    =
    \alpha_t x_0
    +
    \sigma_t \epsilon,
    \qquad
    \epsilon\sim\mathcal N(0,I).
\end{equation}
Let $\mathcal T_t$ denote the corresponding forward noising operator,
\begin{equation}
    \mathcal T_t[p](x_t)
    =
    \int
    \mathcal N(x_t;\alpha_t x_0,\sigma_t^2 I)
    p(x_0)\,dx_0 .
    \label{eq:vp_operator}
\end{equation}

Then
\begin{equation}
    p_{A,t}
    =
    \mathcal T_t[p_A],
    \qquad
    p_{B,t}
    =
    \mathcal T_t[p_B],
\end{equation}
and hence
\begin{equation}
    q_{w,t}
    \propto
    \left(\mathcal T_t[p_A]\right)^{1-w}
    \left(\mathcal T_t[p_B]\right)^w.
    \label{eq:qwt_forward}
\end{equation}

In contrast, directly diffusing the endpoint geometric composition gives
\begin{equation}
    \bar q_{w,t}
    =
    \mathcal T_t[q_{w,0}]
    \propto
    \mathcal T_t
    \left[
        p_A^{1-w}p_B^w
    \right].
    \label{eq:qbar}
\end{equation}

In general,
\begin{equation}
    \mathcal T_t
    \left[
        p_A^{1-w}p_B^w
    \right]
    \not\propto
    \left(\mathcal T_t[p_A]\right)^{1-w}
    \left(\mathcal T_t[p_B]\right)^w.
    \label{eq:non_commute}
\end{equation}
Thus, forward diffusion and geometric composition do not generally commute.
Consequently, the exact fixed-noise-level identity in
~\eqref{eq:app_fixed_level_identity} does not by itself imply an exact
product form for the endpoint distribution generated by the composed reverse
sampler.

\paragraph{Path-consistency residual.}

We next characterize the discrepancy locally in diffusion time. Consider
the shared forward SDE
\begin{equation}
    dX_t
    =
    f_t(X_t)\,dt
    +
    g_t\,dW_t,
\end{equation}
with Fokker--Planck operator
\begin{equation}
    \mathcal L_t^\ast p
    =
    -\nabla_x\cdot(f_t p)
    +
    \frac{g_t^2}{2}\Delta_x p.
\end{equation}
Both component marginals satisfy
\begin{equation}
    \partial_t p_{A,t}
    =
    \mathcal L_t^\ast p_{A,t},
    \qquad
    \partial_t p_{B,t}
    =
    \mathcal L_t^\ast p_{B,t}.
\end{equation}

For the geometric composition in ~\eqref{eq:app_qwt}, direct
differentiation gives
\begin{equation}
    \partial_t q_{w,t}
    =
    \mathcal L_t^\ast q_{w,t}
    +
    q_{w,t}
    \left(
        R_{w,t}
        -
        \mathbb E_{q_{w,t}}[R_{w,t}]
    \right),
    \label{eq:app_path_residual}
\end{equation}
where
\begin{equation}
    R_{w,t}(x_t,s)
    =
    \frac{g_t^2}{2}
    w(1-w)
    \left\|
        s_{B,t}(x_t,s)
        -
        s_{A,t}(x_t,s)
    \right\|_2^2 .
    \label{eq:app_residual}
\end{equation}

Equation~\eqref{eq:app_path_residual} explicitly quantifies the
path-consistency gap. In particular, only the state-dependent variation of
$R_{w,t}$ affects the diffusion path: if $R_{w,t}$ is constant in $x_t$,
the residual is canceled by the normalization term
$\mathbb E_{q_{w,t}}[R_{w,t}]$.

We therefore define the accumulated path-consistency error
\begin{equation}
    \epsilon_{\mathrm{pc}}(s)
    =
    \int_0^T
    \mathbb E_{x_t\sim q_{w,t}(\cdot\mid s)}
    \left[
        \left|
            R_{w,t}(x_t,s)
            -
            \mathbb E_{q_{w,t}}[R_{w,t}]
        \right|
    \right]
    dt .
    \label{eq:app_pc_error}
\end{equation}

When $\epsilon_{\mathrm{pc}}$ is small, the geometric family
$\{q_{w,t}\}$ approximately follows the same diffusion dynamics as the two
component policies.

\paragraph{Endpoint interpretation.}

At sufficiently large noise levels, the two component marginals approach the
same diffusion prior, so that $q_{w,T}$ is also close to the sampler
initialization distribution. Under this terminal-matching condition and
small path-consistency error, the endpoint distribution
$\pi_w^{\mathrm{sam}}$ induced by the composed reverse sampler is therefore
well approximated by
\begin{equation}
    \pi_w^{\mathrm{sam}}(a\mid s)
    \approx
    q_{w,0}(a\mid s)
    \propto
    \pi_A(a\mid s)^{1-w}
    \pi_B(a\mid s)^w.
    \label{eq:app_endpoint_approx}
\end{equation}

Hence, distribution multiplication should be interpreted as an exact
fixed-noise-level property of CFG and an approximate endpoint interpretation
of the complete diffusion sampling process.















\subsection{Adaptive Composition}
\label{app:adaptive}

\paragraph{Pathwise divergence estimator.}
The adaptive guidance objective in Section~\ref{sec:adaptive_cfg} constrains the composed sampler to maintain sufficient deviation from the pure exploitation branch. We measure this deviation along trajectories generated by the composed policy.

For a state $s$, let $x_t^{(r)}$ denote the intermediate sample at denoising step $t$ along the $r$-th trajectory generated using guidance weight $w$. At the same intermediate sample $x_t^{(r)}$, we evaluate the reverse-transition means of the composed sampler and the pure exploitation sampler, denoted by $\mu_t^{w}(x_t^{(r)},s)$ and $\mu_t^{1}(x_t^{(r)},s)$, respectively. When the two reverse transitions share variance $\sigma_t^2 I$, their conditional Gaussian KL is

\begin{equation}
D_t^{(r)}(w;s)
=
\frac{
\left\|
\mu_t^{w}(x_t^{(r)},s)
-
\mu_t^{1}(x_t^{(r)},s)
\right\|_2^2
}{
2\sigma_t^2
}.
\label{eq:step_path_kl}
\end{equation}

In practice, we estimate the pathwise discrepancy as

\begin{equation}
\widehat D_{\mathrm{path}}(w)
=
\operatorname{median}_{s\in\mathcal S}
\left[
\frac{1}{d_a R}
\sum_{r=1}^{R}
\sum_{t=0}^{T-1}
\frac{
\left\|
\mu_t^{w}(x_t^{(r)},s)
-
\mu_t^{1}(x_t^{(r)},s)
\right\|_2^2
}{
2\sigma_t^2
}
\right],
\label{eq:path_divergence}
\end{equation}

where $\mathcal S$ is a set of states sampled from the replay buffer, $d_a$ is the action dimension, and $R$ is the number of sampled generative trajectories per state. For QVPO, when multiple candidate actions are sampled for each state, the corresponding estimates are additionally averaged over the candidates.

Importantly, $x_t^{(r)}$ is sampled from the trajectory generated by the
composed policy with guidance weight $w$. Both $\mu_t^{w}$ and $\mu_t^{1}$ are evaluated at this same $x_t^{(r)}$; we do not independently generate trajectories from the two policies and compare their resulting samples. The reference $\mu_t^{1}$ corresponds to the $w=1$ pure exploitation branch. For QSM, it retains the exploitation-strength scaling used by the original sampler.

The final reverse step is deterministic in the actual sampling procedure. For the sole purpose of evaluating  Equation~\eqref{eq:path_divergence}, we assign this step an auxiliary comparison variance. This variance is used only in the discrepancy estimator and does not introduce additional noise into the generated action.

Therefore, $\widehat D_{\mathrm{path}}$ should be interpreted as an auxiliary KL-based discrepancy accumulated along the generative trajectory, rather than as the exact endpoint divergence $D_{\mathrm{KL}}(\pi_w\|\pi_1)$.

\paragraph{Adaptive guidance optimization.}
Using the above discrepancy, we optimize the guidance weight through

\begin{equation}
\max_w J_Q(w)
\qquad
\mathrm{s.t.}
\qquad
D_{\mathrm{path}}(w)\geq \delta ,
\label{eq:adaptive_constraint_appendix}
\end{equation}

where $J_Q(w)$ is the estimated value of actions generated under guidance
weight $w$.
The corresponding Lagrangian is

\begin{equation}
\mathcal{L}(w,\lambda)
=
J_Q(w)
+
\lambda
\left(
D_{\mathrm{path}}(w)-\delta
\right).
\label{eq:adaptive_lagrangian_appendix}
\end{equation}

We update $w$ using finite-difference estimates and update the dual variable
$\lambda$ according to the constraint violation.
In implementation,
$D_{\mathrm{path}}(w)$ is replaced by the empirical estimator
$\widehat D_{\mathrm{path}}(w)$ in
Equation~\eqref{eq:path_divergence}.

\paragraph{Non-degradation under exact optimization.}
We finally justify the observation stated in Section~\ref{sec:adaptive_cfg}.
Define the feasible set

\begin{equation}
\mathcal W_\delta
=
\left\{
w:
D_{\mathrm{path}}(w)\geq\delta
\right\}.
\end{equation}

Let $w_0$ be the original guidance coefficient and assume
$w_0\in\mathcal W_\delta$.
Let

\begin{equation}
w^\star
=
\arg\max_{w\in\mathcal W_\delta}
J_Q(w).
\end{equation}

If the constrained optimization is solved globally, then by optimality,

\begin{equation}
J_Q(w^\star)
\geq
J_Q(w_0).
\end{equation}

Under the additional assumption that the critic objective equals the true return of the corresponding composed policy,

\begin{equation}
J_Q(w)=J(\pi_w),
\end{equation}

we obtain

\begin{equation}
J(\pi_{w^\star})
\geq
J(\pi_{w_0}).
\end{equation}

Thus, under these idealized assumptions, the adaptive solution cannot perform worse than the original feasible guidance coefficient.

\paragraph{Calibration of $\delta$.}
When CFG is enabled, we sample a set of states $\mathcal S_{\mathrm{cal}}$ from the replay buffer and evaluate the normalized trajectory divergence at a reference guidance coefficient $w_{\mathrm{ref}}$. We set
\begin{equation}
\delta=\kappa\,\widehat D_{\mathrm{path}}(w_{\mathrm{ref}};\mathcal S_{\mathrm{cal}}),
\end{equation}
where $\widehat D_{\mathrm{path}}$ averages over sampled trajectories for each state and takes the median across states. We use $\kappa=1$ for QVPO; QSM optionally adjusts the target through $\kappa$. The resulting $\delta$ is fixed after calibration and is not recomputed as the policies evolve.

\subsection{Proof of Theorem~\ref{thm:entropy}}
\label{app:entropy_proof}
\begin{proof}
For a fixed state $s$, assume
\begin{equation}
    \pi_X(a\mid s)
    =
    \frac{
        \pi_E(a\mid s)^\beta
    }{
        Z_\beta(s)
    },
    \qquad
    \beta>1.
\end{equation}
Substituting this relationship into the geometric-composition surrogate,
\begin{align}
    q_w(a\mid s)
    &\propto
    \pi_E(a\mid s)^{1-w}
    \pi_X(a\mid s)^w
    \\
    &\propto
    \pi_E(a\mid s)^{1-w}
    \left(
        \pi_E(a\mid s)^\beta
    \right)^w
    \\
    &=
    \pi_E(a\mid s)^{1+w(\beta-1)} .
\end{align}

Define
\begin{equation}
    \alpha(w)
    =
    1+w(\beta-1).
\end{equation}
Then
\begin{equation}
    q_w(a\mid s)
    =
    \frac{
        \pi_E(a\mid s)^{\alpha(w)}
    }{
        Z(\alpha(w),s)
    },
\end{equation}
where
\begin{equation}
    Z(\alpha,s)
    =
    \int
    \pi_E(a\mid s)^\alpha
    \, da .
\end{equation}

For brevity, we omit the conditioning on $s$ below. The entropy is
\begin{align}
    H(q_w)
    &=
    -\mathbb E_{q_w}
    \left[
        \log q_w(a)
    \right]
    \\
    &=
    -\alpha
    \mathbb E_{q_w}
    \left[
        \log \pi_E(a)
    \right]
    +
    \log Z(\alpha).
\end{align}

Since
\begin{equation}
    \frac{\partial \log Z(\alpha)}{\partial \alpha}
    =
    \mathbb E_{q_w}
    \left[
        \log \pi_E(a)
    \right],
\end{equation}
and
\begin{equation}
    \frac{\partial}{\partial \alpha}
    \mathbb E_{q_w}
    \left[
        \log \pi_E(a)
    \right]
    =
    \operatorname{Var}_{q_w}
    \left[
        \log \pi_E(a)
    \right],
\end{equation}
we obtain
\begin{equation}
    \frac{\partial H(q_w)}{\partial \alpha}
    =
    -\alpha
    \operatorname{Var}_{q_w}
    \left[
        \log \pi_E(a)
    \right]
    \le 0.
\end{equation}

Moreover,
\begin{equation}
    \frac{\partial \alpha(w)}{\partial w}
    =
    \beta-1
    >
    0.
\end{equation}
Therefore,
\begin{equation}
    \frac{\partial H(q_w)}{\partial w}
    =
    \frac{\partial H(q_w)}{\partial \alpha}
    \frac{\partial \alpha}{\partial w}
    \le 0.
\end{equation}
Hence, the entropy of the geometric-composition surrogate decreases
monotonically with the guidance coefficient $w$, which proves the theorem.

\end{proof}


\subsection{Policy Implementation details}\label{app:impl_details}

\subsubsection{Online RL Policy Implementation}

\paragraph{Exploration branch pre-training.}
We pretrain a separate exploration policy using the intrinsic reward produced by Random Network Distillation (RND). The RND prediction error provides a novelty signal, while the exploration policy generates actions. This pretraining is conducted separately from reward-based training, and its environment steps are excluded from the formal training budget. In the frozen-branch setting, the pretrained exploration policy remains fixed during formal training.

\paragraph{Classifier-free guidance sampling.}
At each reverse diffusion step, the exploitation and exploration policies predict noise from the same noisy action $x_t$. We combine their predictions as
\begin{equation}
    \epsilon_{\mathrm{cfg}}(x_t,t)
    =
    w\,\epsilon_{\mathrm{exploit}}(x_t,t)
    +
    (1-w)\,\epsilon_{\mathrm{explore}}(x_t,t),
\end{equation}
and use the combined prediction in the original DDPM reverse update. Guidance therefore acts throughout the generative trajectory, rather than interpolating two completed actions. 

\paragraph{Guidance coefficient.}
The coefficient $w\in[0,1]$ is a global scalar controlling the relative influence of the two branches: larger values favor exploitation, whereas smaller values favor RND-driven exploration. In the adaptive variant, $w$ is updated during training to improve estimated action value while maintaining a target divergence from the pure exploitation branch. 

For detailed implementation parameters, see Table \ref{tab:implementation_settings}.

\begin{table*}[ht]
\centering
\small
\caption{Network, optimization, and diffusion settings for Online RL.}
\label{tab:implementation_settings}
\begin{tabular}{@{}p{0.26\textwidth}p{0.30\textwidth}p{0.30\textwidth}@{}}
\toprule
Parameter & QVPO & QSM \\
\midrule
Policy hidden layers & $3\times256$, Mish & $2\times512$, Mish \\
Time embedding & 32-dim sinusoidal embedding & 64-dimensional learnable Fourier embedding with a $2\times128$ conditioning MLP \\
Critic & $3\times256$ hidden layers and 25 quantiles & $2\times512$ hidden layers  \\
RND hidden dimensions & $128/128$ & $128/128$ \\
RND pre-training steps & $100$k & $100$k \\
\midrule
Policy learning rate & $1\times10^{-4}$ & $3\times10^{-4}$ initially, with cosine decay \\
Critic learning rate & $3\times10^{-4}$ & $3\times10^{-4}$ \\
Batch size & 256 & 256 \\
Discount factor $\gamma$ & 0.99 & 0.99 \\
Target update coefficient $\tau$ & 0.005 & 0.005 \\
\midrule
Diffusion formulation & Discrete-time DDPM with noise prediction & Discrete-time DDPM with noise prediction \\
Reverse steps & 20 & 5 \\
Noise schedule & Cosine & VP ($b_{\min}=0.1$, $b_{\max}=10$) \\
Sampling noise scale & 1.0 & 1.0 \\
\bottomrule
\end{tabular}
\end{table*}

\subsubsection{RL Post-training Policy Implementation}

We initialize the base policy from a task-specific SFT checkpoint and keep
it frozen throughout RL post-training. After loading the SFT weights, we
copy its action decoder and action projections to create an independently
trainable exploration expert. Both experts share the frozen vision--language
backbone and its observation features, but maintain separate action-expert
parameters. Consequently, they produce the same velocity before RL updates.
At each denoising step, we combine their predictions as
\begin{equation}
v_{\mathrm{CFG}}
= (1-w)v_{\mathrm{base}}+w\,v_{\mathrm{exp}},
\end{equation}
where $w$ is the guidance scale. Only the exploration expert and value head
receive gradient updates.

We use the standard Flow-SDE and clipped PPO training procedure. During
action generation, one of the five denoising steps is randomly selected for
a stochastic Flow-SDE transition, while the remaining steps use ODE
transitions. PPO updates use the recorded transition log probability and
GAE advantages. The policy predicts 50 action steps, executes the first 25,
and then observes the environment again. Each run trains for 50 rounds and
is evaluated every 10 rounds on 64 fixed scenes. We test guidance scales
of $0.5$, $0.8$, and $0.98$; the original single-policy Flow-SDE + PPO
method is trained separately as the baseline.

For detailed implementation parameters, see Table \ref{tab:cfg_hyperparameters}.

\begin{table}[ht]
\centering
\small
\caption{Network and flow-sampling hyperparameters for CFG post-training.}
\label{tab:cfg_hyperparameters}
\begin{tabular}{lll}
\toprule
Parameter & $\pi_0$ & $\pi_{0.5}$ \\
\midrule
\multicolumn{3}{l}{\textit{Network and optimization}} \\
Visual inputs & 3 images & 3 images \\
Action dimension & 14 & 14 \\
Trainable actor & copied action expert & copied action expert \\
Critic input & action features (1024-D) & VLM features (2048-D) \\
Critic hidden widths & 512, 256, 128 & 512, 256, 128 \\
Actor learning rate & $5\times10^{-6}$ & $5\times10^{-6}$ \\
Critic learning rate & $10^{-4}$ & $10^{-4}$ \\
Global / micro batch size & 256 / 8 & 256 / 8 \\
\midrule
\multicolumn{3}{l}{\textit{Flow formulation and sampling}} \\
Action prediction horizon & 50 & 50 \\
Executed action chunk & 25 & 25 \\
Denoising steps & 5 & 5 \\
Flow-SDE noise level & 0.5 & 0.3 \\
Training transitions & 1 SDE + 4 ODE & 1 SDE + 4 ODE \\
Evaluation transitions & 5 ODE & 5 ODE \\
\bottomrule
\end{tabular}
\end{table}

\subsection{Cost comparison} \label{app:cost}

The runtime results are summarized in Table~\ref{tab:runtime_ablation}. Training time is measured over 1,000 environment steps. Evaluation time is averaged over five 1,000-step episodes.

\begin{table}[ht]
\centering
\caption{Runtime comparison on HalfCheetah-v4. }
\label{tab:runtime_ablation}
\begin{tabular}{llcc}
\toprule
Algorithm & Policy & Training (s) &  GPU memory (MiB) \\
\midrule
QSM  & Original         & 2.42  & 507  \\
QSM  & CFG-Policy       & 2.94  & 577  \\
QVPO & Original         & 8.87  & 831  \\
QVPO & CFG-Policy       & 9.37  & 867  \\
\bottomrule
\end{tabular}
\end{table}

\end{document}